\documentclass{llncs}
\usepackage[T1]{fontenc}
\usepackage{graphicx}

\usepackage[english]{babel}
\usepackage{dsfont, amsmath}

\usepackage{caption}
\usepackage{subcaption}  % Para tener captions dentro de las subfiguras
\usepackage{amssymb}
\usepackage{booktabs} % For better table formatting
\usepackage{graphicx} % Required for inserting images
\usepackage[colorlinks=true, urlcolor=blue, linkcolor=blue, citecolor=green]{hyperref}
\usepackage{tikz}
\usetikzlibrary{arrows.meta}
\usetikzlibrary{decorations.pathmorphing}
\usepackage{algorithm}
\usepackage{algpseudocode}
\usepackage{pgf} % For arithmetic operations in TikZ
\usepackage{comment}
\usepackage{booktabs}
\usepackage{url} %For writing paths
\usepackage[normalem]{ulem}
\usepackage{multirow}
\usepackage{placeins} % For removing strange spaces

\usepackage[style=apa]{biblatex} % <- AGREGAR ESTA LÍNEA
\newcommand{\entities}{\texttt{ent}}

\newcommand{\sharpPhard}{\textsc{\#P-hard}}

\newcommand{\R}{\mathds{R}}
\newcommand{\perm}{perm}

\newcommand{\SHAPscore}{\textsc{SHAP-score}}

\newcommand{\charactheristicFunction}{\nu}
\newcommand{\players}{\mathcal{I}}

\newcommand{\drawUnrelatedTreeWithColor}[5]{
    \node[circle, draw=#5] (#1) at (#2, #3) {#4};
    \pgfmathsetmacro{\x}{#2-2}
    \pgfmathsetmacro{\y}{#3-0.3} 
    \draw[#5, wiggly] (#2, \y)
        -- ++(-1,-1.6) 
        -- ++(2,0) 
        -- cycle;
    \node[text=#5] at (\x, \y) {#4 subtree};
}

\newcommand{\drawUnrelatedTree}[4]{
    \drawUnrelatedTreeWithColor{#1}{#2}{#3}{#4}{red}
}

\usepackage[backgroundcolor=orange, textcolor=black, textsize=tiny]{todonotes}

\newcommand{\topo}{topos}

\newcommand{\childNetwork}{\emph{Child}}
\newcommand{\cancerNetwork}{\emph{Cancer}}

\tikzset{
	nodo/.style={
		shape=circle,
		draw=black,
		line width=1pt,   
		minimum size=7mm
	},
	arista/.style={
		line width=1pt,  
		-{Latex[length=3mm]}  
	},  
	mySnake/.style={
		decorate, 
		decoration={snake, amplitude=.4mm, segment length=4mm, post length=1mm}
	}
}

\begin{document}
%
% ENGLISH VERSION
\title{Beyond Shapley: Efficient Computation of Asymmetric Shapley Values}

%\titlerunning{Abbreviated paper title}
% If the paper title is too long for the running head, you can set
% an abbreviated paper title here
%
\author{Ezequiel Companeetz\inst{1},
Santiago Cifuentes\inst{2},
Sergio Abriola\inst{2}}
\authorrunning{Companeetz et al.}
% First names are abbreviated in the running head.
% If there are more than two authors, 'et al.' is used.
%
\institute{Departamento de Computación, Facultad de Ciencias Exactas y Naturales, UBA \and
Instituto de Ciencias de la Computación (ICC), CONICET}
\maketitle              % typeset the header of the contribution
\begin{abstract}
We address the problem of explainability in machine learning models through feature attribution methods. In particular, we consider a variant of Shapley values known as Asymmetric Shapley Values (ASV), which enables the incorporation of causal knowledge into model-agnostic explanations through the use of a causal graph. We show that in certain contexts in which the computation of SHAP is \#\textsc{P}-hard, the exact computation of ASV can be done in polynomial time.
To extend this algorithmic result, we introduce a notion of equivalence classes over the topological orderings of the underlying causal graph, which is useful to reduce the time to compute ASV. In particular, we present a polynomial-time algorithm (in the number of equivalence classes) to compute it whenever the causal graph is a rooted directed tree.
Finally, we develop an algorithm for approximating ASV in arbitrary causal DAGs which relies on a procedure to sample topological orderings uniformly at random. To implement this sampling mechanism we leverage known algorithms as well as simpler alternatives. Our experimental results demonstrate the practical viability of the proposed approach in realistic causal structures.

% keywords in lowercase
\keywords{Explainability, Asymmetric Shapley Values (ASV), SHAP}
\end{abstract}

% SPANISH VERSION
% Clear the previous title info and set Spanish version

% Spanish title and abstract
\begin{center}
\vspace{1cm}
{\Large \bfseries\boldmath Más allá de Shapley: Algoritmos Eficientes para calcular los Shapley Values Asimétricos}

\vspace{0.5cm}
\end{center}

\begin{abstract}
\noindent Abordamos el problema de la explicabilidad en modelos de aprendizaje automático mediante métodos de \textit{feature attribution}. En particular, consideramos una variante de los \textit{Shapley values} conocida como \textit{Asymmetric Shapley Values} (ASV), que permite incorporar conocimiento causal en explicaciones \textit{model-agnostic} mediante el uso de un grafo causal. Mostramos que, en ciertos contextos en los que el cálculo de SHAP es \#\textsc{P}-hard, el cálculo exacto de ASV puede realizarse polinomial.
Para extender este resultado algorítmico, introducimos una noción de clases de equivalencia sobre los ordenamientos topológicos del grafo causal subyacente, lo cual resulta útil para reducir el tiempo de cómputo de ASV. En particular, presentamos un algoritmo en tiempo polinomial (en el número de clases de equivalencia) para calcularlo cuando el grafo causal es un árbol dirigido enraizado.
Finalmente, desarrollamos un algoritmo para aproximar ASV en DAGs causales arbitrarios, basado en un procedimiento que muestrea ordenamientos topológicos uniformemente al azar. Para implementar este mecanismo de muestreo consideramos algoritmos conocidos así como alternativas más simples. Nuestros resultados experimentales demuestran la viabilidad práctica del enfoque propuesto en estructuras causales realistas.
\bigskip

\keywords{Explicabilidad, Asymmetric Shapley Values (ASV), Shap, Órdenes topológicos}
\end{abstract}

\vspace{0.5cm}
\setcounter{page}{1}

\vspace{-1cm}

\section{Introduction}

% In recent years, the capabilities of artificial intelligence models have grown rapidly, along with the complexity of their architectures. 
%This progress has enabled the solution of tasks that once appeared unattainable, such as the use of language models to solve complex mathematical problems \parencite{frontierMath}. 
%At the same time, current systems have reached scales far beyond those of a few years ago, both in terms of the number of parameters and structural complexity \parencite{EpochLargeScaleModels2024}. 
%This lack of transparency is especially problematic in sensitive settings, such as medical diagnosis or high-stakes decision-making, where producing an accurate prediction is not enough: it is equally important to understand how that prediction was made. In response to this challenge, the field of Explainable Artificial Intelligence (XAI) has emerged \parencite{mersha2024explainable}, aiming to develop methods capable of making model decisions more interpretable.

The field of XAI (\textit{Explainable Artificial Intelligence}) aims to develop methods capable of making model decisions more interpretable \parencite{mersha2024explainable}. Within this field, one of the most influential approaches is feature attribution, where each input variable is assigned a score reflecting its contribution to the prediction. 
In this line of work, SHAP is one of the most well-known methods \parencite{shapOriginalPaper}. It is based on the Shapley values, a concept originating in cooperative game theory \parencite{shapley1953value}. 
Intuitively, Shapley values provide a fair way to distribute the total value achieved by a coalition among its participants, according to their average marginal contribution when joining. 
Translated to machine learning, features play the role of players, and the value assigned to each feature aims to quantify its contribution to the model’s prediction for a given instance.

Despite its popularity, SHAP has important limitations. On the one hand, depending on the family of ML models under consideration and the underlying distribution of the data, the computation of the Shapley values can be tractable or not. For example,
it is well-known that for product distributions the complexity of computing these scores is equivalent to the complexity of computing the average 
value of the model \parencite{van2022tractability}. In particular, this implies that there exist efficient algorithms for the case where the models are given, for example, by decision trees or by restricted families of circuits \parencite{arenas2023complexity}, while the problem is intractable whenever the models are more expressive, as in the case of neural nets. 
For more general families of distributions the problem becomes much harder: for instance, when considering Naive Bayes distributions, the computation of this score is intractable already when considering trivial models that only depend on one feature \parencite{van2022tractability}.

On the other hand, there are also concerns about the quality of the explanations it produces. In particular, it is not always clear how to interpret, in the context of AI models, the axioms inherited from game theory, and several works point out that these explanations may fail to adequately capture relationships 
such as dependence, relevance, or causality between variables \parencite{fryer2021shapley, marques2023logic, huang2023inadequacy}. More generally, there is no absolute consensus on what should be considered an “explanation” in AI: many current techniques are pragmatic approximations to a much broader problem related to understanding, causality, and human interpretability \parencite{MILLER20191, lipton2017mythosmodelinterpretability}.

The Asymmetric Shapley Values (ASV from now on) are a proposal introduced by \cite{frye2019asymmetric} to overcome some of these limitations. This variant incorporates causal information into the computation of attribution scores by restricting attention to permutations of variables that respect a causal graph. 
In this way, instead of treating all features symmetrically, priority is given to the more fine-grained ones (in the sense that the information they provide is not present in the other variables). This can produce explanations that are more faithful to the data-generating process and can help better detect phenomena such as bias or discrimination. 
Moreover, although ASV was primarily proposed to improve explanatory quality, it naturally raises the question of whether this additional restriction may also provide computational benefits.

Therefore, in this paper we address the problem of efficiently computing the ASV scores, which was not explored thoroughly in previous literature. We will prove that for some families of models for which
computing the Shapley values is intractable there are still polynomial time algorithms for computing the ASV. Inspired by this initial result, we develop an algorithm to compute the ASV exactly in more general contexts
by exploiting the fact that many permutations of the features can be grouped in a unique equivalence class to simplify computations. Finally, we also propose a flexible approximation scheme that allows to approximate the ASV as long as it is possible to (1) Sample a topological order of the causal graph uniformly at random, and (2) Sample a data point with respect to the data distribution after conditioning a subset of the features to have specific values. We instantiate all these general algorithms with particular 
examples of causal graphs and machine learning models and show experimentally that they are efficient.

\section{Definitions}

% En esta sección introducimos las nociones formales que utilizaremos a lo largo del trabajo. Primero definimos los \textit{Shapley values} y su instanciación al problema de \textit{feature attribution}. Luego presentamos los \textit{Asymmetric Shapley Values} (ASV), que incorporan relaciones entre las variables a la atribución mediante un grafo causal. Finalmente, introducimos las redes bayesianas como modelo para la distribución de los datos y describimos las familias restringidas de redes que consideraremos en nuestros resultados.

%\subsection{Shapley values y feature attribution}

Let $X$ be a finite set of features. A (binary) \textit{entity} over $X$ is a mapping $e:X \to \{0, 1\}$ assigning a value to every feature from $X$. We denote by $\entities{}(X)$ the set of all entities, and we will assume that some probability distribution is provided over this set.
A (binary) classifier is a mapping $M: \entities{}(X) \to \{0, 1\}$. We restrict to both binary entities and classifiers only for simplicity: all the results and algorithms we obtain can be extended to more general cases straightforwardly. 

Given a model $M$ and an entity $e$, a \textit{feature attribution score} is a mapping $\phi_{M,e}:X \to \R$. Intuitively, the value $\phi_{M,e}(x)$ indicates the relevance of feature $x$ with respect to prediction $M(e)$. As mentioned before, one of the most 
studied feature attribution schemes is the \SHAPscore{} \parencite{shapOriginalPaper}, which is theoretically grounded on the \emph{Shapley values} \parencite{shapley1953value} from cooperative game theory. In that context, the Shapley values
come up as the ``most fair'' way to distribute some capital earned during a team game. More precisely, given a finite set of players $\players$ and a \emph{characteristic function} $\charactheristicFunction:\mathcal{P}(\players)\to\mathbb{R}$ describing, for each subset
of the players, the reward that such coalition would obtain in some fixed game; the shapley values are the unique assignment $\varphi_\nu: \players \to \R$ of reward to each player $i$ satisfying the following desired properties:

\begin{enumerate}
	\item Efficiency: All the reward is distributed, i.e. $\sum_{i \in \players} \varphi_{\nu}(i) = \nu(\players) - \nu(\emptyset)$.
	\item Symmetry: If $\nu(S \cup \{i\}) = \nu(S \cup \{j\})$ for every $S \subseteq \players \setminus \{i,j\}$, then $\varphi_{\nu}(i) = \varphi_{\nu}(j)$.
	\item Null Player: If $\nu(S \cup \{i\}) = \nu(S)$ for every $S \subseteq \players$, then $\varphi_{\nu}(i) = 0$.
	\item Linearity: For any pair of characteristic functions $\nu_1,\nu_2$ and $a \in \R$ it holds that $\varphi_{a\nu_1 + \nu_2}(i) = a\varphi_{\nu_1}(i) + \varphi_{\nu_2}(i)$\footnote{The characteristic function $a \nu_1 + \nu_2$ is naturally defined as $(a \nu_1 + \nu_2)(S) = a \nu_1(S) + \nu_2(S)$.}. 
\end{enumerate}

Moreover, the Shapley values have an exact closed form: if we refer to players with indices in the range $\{1,\ldots, n\}$, we can write
\begin{align}\label{eq:shapley}
	\varphi_{\nu}(i) =  \frac{1}{|\players|!} \sum_{\pi \in perm(\players)} \left[ \nu(\pi_{<i} \cup \{i\}) - \nu(\pi_{<i}) \right]
\end{align}
where $perm(\players)$ denotes the set of all permutations of the number from 1 to $n$ and $\pi_{<i}$ denotes the set of indices before $i$ in the permutation $\pi$ (i.e. $\pi_{<i} = \{j : \pi(j) < \pi(i)\}$). Fundamentally, Eq.~\eqref{eq:shapley} is summing across all possible ways
of ordering the players, considering the impact of the inclusion of player $i$ to the reward of the game when adding the players according to each of these orderings.

In the context of machine learning explainability, the Shapley values are instantiated by thinking of the features as players and considering the characteristic function
\begin{align}\label{eq:char_fun_for_ML}
	\nu_{M,e}(S) = \mathbb{E}[M(e') | S, e] = \sum_{e'\in \entities(X)} M(e') \Pr[e'| \{e'(x) = e(x) : x \in S\}] 
\end{align}
which represents the expected value of the model $M$ when restricting the features from $S$ to have the value dictated by entity $e$.
Intuitively, if some prediction $M(e) = 1$ depends strongly on the fact that some feature $x$ has value $e(x) = 1$, then it will hold that $\nu_{M,e}(S)$ is close to 1 if $x \in S$ and far from 1 otherwise. For example, if the model $M$ merely outputs the value of $x$ (i.e. $M(e) = e(x)$) and 
we assume that the value of $x$ is independent of the other features and uniformly distributed between $0$ and $1$, then $\nu_{M,e}(S) = 1$ if $x\in S$ while $\nu_{M,e}(S) = \frac{1}{2}$ otherwise. Thus, it is expected that $\nu_{M,e}$ will give a bigger value to those features more relevant for the prediction.

As mentioned in the introduction, the Shapley values may fail to distinguish features that are heavily correlated. For example, if some feature $j$ depends deterministically of some other feature $i$ it can happen that both features receive the same score, even though $i$ is intuitively more relevant
for \textit{any} model and prediction. Note that the \textit{Symmetry} property essentially implies that this should always happen whenever two features share the same information. Thus, to avoid this problem it is somewhat necessary to relax that axiom and define a different score.
This is the idea underlying the \textit{Asymmetric Shapley Values}.

\begin{definition}[Asymmetric Shapley Values \parencite{frye2019asymmetric}]\label{def:asv}
	Given a model $M$, an entity $e$, a feature $x$ and some weight function $w: perm(X) \to \R_{\geq 0}$ satisfying $\sum_{\pi \in perm(X)} w(\pi) = 1$, we define the Asymmetric Shapley Value of $x$ as
	\begin{align*}
		\phi^{ASV}_{M,e}(x) = \sum_{\pi \in perm(X)} w(\pi) \left[ \nu_{M,e}(\pi_{<i} \cup \{i\}) - \nu_{M,e}(\pi_{<i}) \right]
	\end{align*}
	where $\nu_{M,e}$ is defined in Eq.~\eqref{eq:char_fun_for_ML} 
\end{definition}

This definition is derived from the corresponding notion of asymmetric shapley values from cooperative game theory. The function $w$ is useful for weighting some orderings above others. For example, if some feature $j$ is a function of $i$ it is possible to introduce this fact into the score by assigning weight zero to all orderings in which $i$
comes after $j$. Note that the Shapley values are obtained by taking $w(\pi) = \frac{1}{|X|!}$, and we will refer to this particular case as $\phi^{Shap}_{M,e}$.

\cite{frye2019asymmetric} proposes to define the weight function $w$ by considering a \textit{causal DAG} $C$ over the features. More precisely, $C$ is a directed acyclic graph whose nodes are the set of features $X$ and where an edge directed from $i$ to $j$ indicates that $j$ depends on $i$. Then, they define the weight function
\begin{align}\label{eq:w_in_ASV}
	w_C(\pi) = \frac{\mathbb{I}_{\{\pi \text{ is a toposort}\}}}{|topos(C)|} 	
\end{align}
where $topos(C)$ is the set of all topological orderings of $C$. This weight function generalizes the idea mentioned earlier of considering only those orderings consistent with the known causality of the features. In the rest of this paper we will be interested in developing efficient algorithms for computing the ASV from Def.~\ref{def:asv} whenever the weight function is given by 
Eq.~\eqref{eq:w_in_ASV}. 

Moreover, we will consider that the distribution of the space of entities is described by a Bayesian Network $N$. We refer to the reader to standard references for their precise formal definitions \parencite{pearl1986bayesianInference} while here we describe them conceptually: a bayesian network $N$
defines a distribution for $\entities(X)$ with a DAG whose nodes are the features from $X$. Every node $x$ has a set of parents $p(x) \subseteq X$, and the network provides, for each way of assigning the features in $p(x)$, the conditional distribution of $x$. See Figure \ref{fig:cancer_bn_with_cpts} for an example. Given $N$ and any entity $e$ it is possible to compute
the probability of $e$ by traversing the DAG of $N$ in any topological sort multiplying the corresponding conditional probabilities.

One of the simplest types of Bayesian Networks is the Naive Bayes, which was already mentioned in regard to intractability results for the Shapley values. This network has a particular structure, in which a single node is parent to all the others and no other pairs of nodes are connected. See Figure \ref{fig:naive_bayes_network} for an example.

\begin{figure}[ht]
	\centering
	
	\begin{subfigure}[b]{0.42\textwidth}
		\centering
		\includegraphics[width=\linewidth]{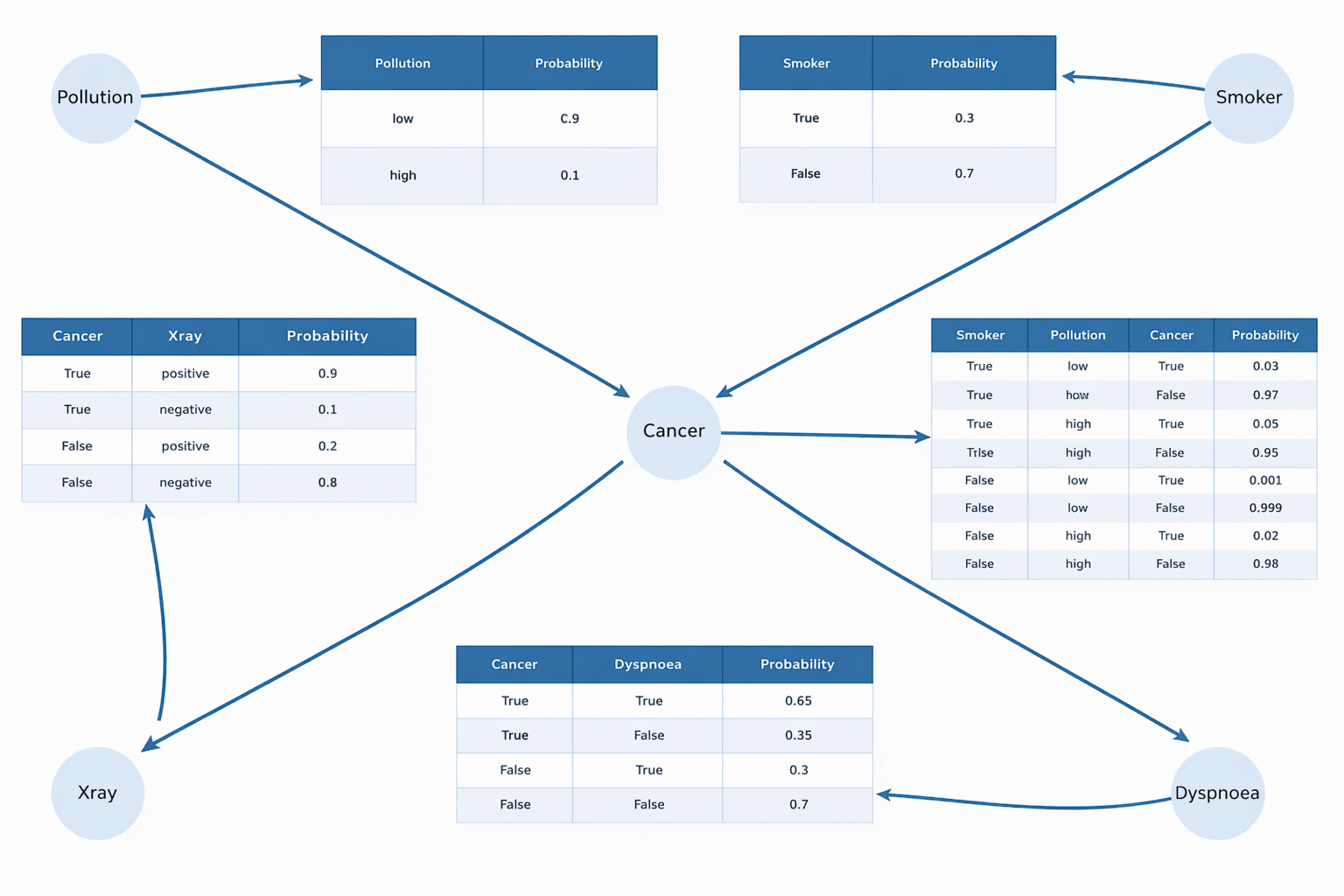}
		\caption{A cancer prediction Bayesian network.}
		\label{fig:cancer_bn_with_cpts}
	\end{subfigure}
	\hfill
	\begin{subfigure}[b]{0.48\textwidth}
		\centering
		\begin{tikzpicture}[scale=0.8, transform shape,
			nodo/.style={circle, draw, minimum size=8mm, inner sep=0pt}
			]
			% Root
			\node[nodo] (C) at (0,2) {$C$};
			
			% Features
			\node[nodo] (X1) at (-3,0) {$X_1$};
			\node[nodo] (X2) at (-1.5,0) {$X_2$};
			\node[draw=none] (dots) at (0,0) {$\cdots$};
			\node[nodo] (Xn1) at (1.5,0) {$X_{n-1}$};
			\node[nodo] (Xn) at (3,0) {$X_n$};
			
			% Edges
			\draw[->] (C) -- (X1);
			\draw[->] (C) -- (X2);
			\draw[->] (C) -- (Xn1);
			\draw[->] (C) -- (Xn);
		\end{tikzpicture}
		\caption{A Naive Bayes network.}
		\label{fig:naive_bayes_network}
	\end{subfigure}
	
	\caption{Examples of Bayesian network structures.}
	\label{fig:bayesian_network_examples}
\end{figure}
\enlargethispage{2\baselineskip} % This is done so that the footnote can fit.

\vspace{-1cm}

A key point for our purposes is that computing the characteristic function (from Eq.~\eqref{eq:char_fun_for_ML}) under a Bayesian network distribution requires probabilistic inference. Since exact inference in Bayesian networks is $\sharpPhard$ \footnote{\#\textsc{P} is a class of \textit{counting} problems. The \#\textsc{P}-hard problems from \#\textsc{P} are the hardest problems in this class, and it is known that if they can be solved in polynomial time then $\textsc{P} = \textsc{NP}$. Then, under the usual theoretical assumption, \#\textsc{P}-hard problems cannot be solved efficiently.} in general \parencite{Cooper1990}, we focus on restricted networks such as polytrees, where inference can be performed in polynomial time.

Sometimes we will identify the network itself as a causal graph. Thus, the network will provide both the distribution necessary to compute the characteristic function $\nu_{M,e}$ and the causal graph necessary to define the weight function $w^C$.
This identification is not always appropriate. In general, a Bayesian network represents a joint probability distribution and its conditional independencies, but its directed edges do not necessarily encode causal relations. Therefore, using the same graph as both a probabilistic model and a causal graph is only justified when the network is assumed to reflect the underlying causal structure. Otherwise, this should be understood merely as a modeling simplification.

\section{Theoretical Results}

\subsection{Tractability of ASV}

As mentioned before, the computation of the Shapley Values is intractable (in particular, $\sharpPhard$) even for trivial models
if the underlying distribution is represented by a Naive Bayes network. We show that this is not the case for the ASV whenever we use the bayesian network as the causal graph\footnote{Due to space limitations, we omit all proofs and defer the complete arguments to the supplementary material.}.

\begin{theorem}\label{the:asv_naive_bayes_tractable_clean}
	Let $\mathcal{F}$ be any family of models. Then, the ASV can be computed in polynomial time for the family $\mathcal{F}$ under Naive Bayes distributions (where the causal graph is the network itself) if and only if the Shapley values can be computed in polynomial time for the family $\mathcal{F}$ under product distributions\footnote{With our notation, a product distribution over $\entities{}(X)$ is a distribution that factorizes as $\Pr\left[e\right] = \prod_{x \in X}\Pr\left[ x = e(x) \right]$.}.
\end{theorem}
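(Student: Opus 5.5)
The plan is to use the very restricted shape of the Naive Bayes graph. Let $C$ be the root and $X_1,\dots,X_n$ its children, so the feature set is $\{C,X_1,\dots,X_n\}$. A permutation is a topological ordering of this graph if and only if it places $C$ first; the remaining features can then appear in any order. Hence $|\topo(C)| = n!$, and $w_C$ is uniform over orderings of the form $(C,\sigma)$ with $\sigma \in perm(\{X_1,\dots,X_n\})$. Plugging this into Definition~\ref{def:asv} gives two formulas. For the root,
\[
\phi^{ASV}_{M,e}(C)=\nu_{M,e}(\{C\})-\nu_{M,e}(\emptyset).
\]
For a child $X_i$,
\[
\phi^{ASV}_{M,e}(X_i)=\frac{1}{n!}\sum_{\sigma}\left[\nu_{M,e}(\sigma_{<i}\cup\{C,X_i\})-\nu_{M,e}(\sigma_{<i}\cup\{C\})\right].
\]
The second formula is exactly the Shapley value of $X_i$ in the game $\nu'(S)=\nu_{M,e}(S\cup\{C\})$ on $n$ players.

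The key observation is the following. Once $C=e(C)$ is conditioned on, the Naive Bayes distribution over $X_1,\dots,X_n$ factorizes, with $\Pr[X_i=1]=\Pr[X_i=1\mid C=e(C)]$. Define $D_c$ as the product distribution on all features that puts $C=c$ with probability $1$ and gives each $X_i$ these conditional marginals. Then for every $S$,
\[
\nu'(S)=\mathbb{E}_{D_{e(C)}}[M\mid S,e],
\]
so $\phi^{ASV}_{M,e}(X_i)=\phi^{Shap}_{M,e}(X_i)$ computed under $D_{e(C)}$. Indeed, in that game $C$ is a null player and every other player sees exactly $\nu'$. So a Shapley oracle for product distributions yields all children's ASVs with one call.

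For the root we also need $\nu(\emptyset)$ and $\nu(\{C\})$. Since $\nu(\{C\})=\mathbb{E}_{D_{e(C)}}[M]$ and $\nu(\emptyset)=\sum_{c}\Pr[C=c]\,\mathbb{E}_{D_c}[M]$, it suffices to compute expectations under product distributions. These are recovered from the oracle by efficiency: $\mathbb{E}_D[M]=M(e)-\sum_x\phi^{Shap}_{M,e}(x)$, where $M(e)=\nu(X)$ is one model evaluation. This proves the forward direction.

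For the converse, take a model $M\in\mathcal{F}$ over $X$ and a product distribution $P$. Add a fresh root $C$ (with, say, a uniform prior) and set $\Pr[X_i\mid C=c]=P[X_i]$ for both values of $c$. View $M$ as a model on $X\cup\{C\}$ that ignores $C$. Conditioning on $C$ then changes nothing, so by the formula above $\phi^{ASV}(X_i)$ equals the Shapley value of $X_i$ under $P$.

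The step I expect to need the most care is the modelling fine print rather than the algebra. The forward direction uses product distributions with a degenerate marginal on $C$. If these are disallowed, I would instead compute on the children only, with the model $M|_{C=e(C)}$, and note that restricting a feature is polynomial for the standard families. The converse needs $\mathcal{F}$ to be closed under adding a dummy feature. I would state both as mild closure assumptions on $\mathcal{F}$, which are satisfied by all the usual families (decision trees, circuits, neural nets).
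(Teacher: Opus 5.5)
Your direction ``Shapley oracle under product distributions $\Rightarrow$ ASV under Naive Bayes'' is essentially the paper's argument. The paper also conditions on the root through a product distribution $\Pr'$ that fixes the root at $e(x_1)$ with probability $1$ and gives each child its conditional marginal. It also reduces the root's ASV $\nu(\{x_1\})-\nu(\emptyset)$ to averages of $M$ under product distributions, and obtains those averages from the Shapley oracle. It cites Van den Broeck et al.\ for that last step, whereas your efficiency identity proves it directly, and your null-player remark supplies the step the paper calls easy.

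Two small points here. First, the paper's footnote defines a product distribution by the factorization $\Pr[e]=\prod_x \Pr[x=e(x)]$, which allows degenerate marginals, and the paper itself uses such a $\Pr'$. So your first caveat is unnecessary, and so is the fallback via $M|_{C=e(C)}$, which would itself be a closure assumption on $\mathcal{F}$. Second, for $D_c$ with $c\neq e(C)$ the entity $e$ has probability $0$, so the Shapley value at $e$ under $D_c$ is undefined. Call the oracle at some entity $e_c$ in the support of $D_c$ instead. Since $\nu(\emptyset)$ does not depend on the entity, efficiency still gives $\mathbb{E}_{D_c}[M]=M(e_c)-\sum_x \phi^{Shap}_{M,e_c}(x)$.

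The converse is where you depart from the paper, and your route proves less than the statement. The theorem is claimed for \emph{any} family $\mathcal{F}$, but the fresh root $C$ forces you to assume $\mathcal{F}$ is closed under adding an unused feature. The missing idea is that no new feature is needed.

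The given product distribution $\Pr$ over $X=\{x_1,\ldots,x_n\}$ is already a Naive Bayes distribution with \emph{any} $x_i$ as root, taking the children's conditional tables to be independent of $x_i$. Call this network $\Pr_i$; it induces the same $\nu_{M,e}$. Under its causal graph, the ASV of $x_j$ is the uniform average of $x_j$'s marginal contribution over exactly the $(n-1)!$ permutations starting with $x_i$. This also holds for $j=i$, where the value is $\nu(\{x_i\})-\nu(\emptyset)$. These $n$ blocks partition $perm(X)$ into equal parts, so
\[
\phi^{Shap}_{M,e,\Pr}(x_j)=\frac{1}{n}\sum_{i=1}^{n}\phi^{ASV}_{M,e,\Pr_i}(x_j),
\]
which is the paper's argument. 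It uses the same model on the same feature set, at the cost of $n$ oracle calls instead of one.

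Your dummy-root reduction is correct, and slightly simpler, for decision trees, circuits and similar families. To prove the theorem as stated, though, you need this averaging over roots instead.
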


Theorem~\ref{the:asv_naive_bayes_tractable_clean} shows that, in contrast to the Shapley values, the ASV can be computed under Naive Bayes distributions for simple models such as decision trees. Intuitively, the reason for this is that the causality imposed by the distribution is mimicked by the way that only certain permutations are considered,
 and thus many symmetries can be exploited. We believe this motivates the idea of studying the complexity of computing the ASV whenever the distribution is also used as the causal graph. 

\vspace{-0.3cm}

\subsection{Equivalence classes of topological sorts}

The Shapley values can also be written in closed form as
\begin{align*}
	\phi^{Shap}_{M,e}(x) = \sum_{S \subseteq X \setminus \{x\}} \frac{|S|!(|X|-|S|-1)!}{|X|!}\left[\nu_{M,e}(S \cup \{x\}) - \nu_{M,e}(S)\right] 
\end{align*}
This rewriting is obtained by observing that many permutations give rise to the same set $\pi_{<x}$: more precisely, if $|\pi_{<x}| = s$ there are $s!(|X|-s-1)!$ permutations $\pi'$ satisfying $\pi'_{<x} = \pi_{<x}$. Computing the Shapley values using this formula reduces the number of times that $\nu_{M,e}$ has to be computed, although the number of calls remains
exponential on the number of features.

We will follow a similar strategy to compute the ASV. We define a notion of equivalence between permutations.

\begin{definition}[Equivalence relation for permutations]\label{def:equiv_relation_left_right}
	Let $C$ be a causal graph for features $X$ and fix some feature $x \in X$. We define an equivalence relation $\sim_x$ over $\topo(C)$ as
	\[
	\pi^1 \sim_x \pi^2
	\iff
	\pi^1_{<x}=\pi^2_{<x}.
	\]
\end{definition}

\newcommand{\equivalenceClasses}{\Lambda_C^x}

Let $\equivalenceClasses$ be the set of equivalence classes. It is immediate from the definition that
\begin{align}\label{eq:equivalence_classes_into_shap}
	\phi^{ASV}_{M,e} (x) = \frac{1}{|topos(C)|} \sum_{[\pi] \in \equivalenceClasses} |[\pi]| \left[ \nu_{M,e}(\pi_{<x} \cup \{x\}) - \nu_{M,e}(\pi_{<x}) \right]
\end{align}
Therefore, instead of invoking $\nu_{M,e}$ $2|topos(C)|$ times it is enough to compute it $2|\equivalenceClasses|$ times. This value can be at most $2^{|X|}$, but the hope is that in practical scenarios it is smaller. From now on, we will restrict to the case in which causal graphs are given 
by \textit{rooted directed trees}, polytrees with nodes that have one or fewer parents. In this case, note that we can split the nodes in three groups with respect to $x$: those that are descendants of $x$, those that are ancestors of $x$, and the \textit{unrelated} ones. Observe that descendant/ancestors are never/always in the set $\pi_{<x}$, for any $x$. Thus, the structure of the equivalence classes depends only on the unrelated nodes. See Figure~\ref{fig:ASV_forest_example} for a visual explanation.

\begin{figure}[H]
	\centering
	\begin{tikzpicture}[scale=.55, transform shape, 
		unrelated/.style={circle, draw=red},
		ancestor/.style={circle, draw=blue},
		mergebox/.style={draw=purple!70!black, rounded corners, fill=purple!8, align=center, inner sep=6pt},
		wiggly/.style={decorate, decoration={snake, amplitude=.2mm, segment length=2mm}}
		]
		\node[text=blue] at (-5, 0) {Ancestors};
		\node[text=teal] at (-5, -0.5) {Descendants};
		\node[text=red] at (-5, -1) {Unrelated};        
		
		\node[ancestor] (a1) at (0, 0) {$a_1$};
		\node[unrelated] (u1) at (-1, -2) {$u_1$};
		\node[ancestor] (a2) at (1, -2) {$a_2$};
		
		\drawUnrelatedTree{u2}{-1}{-4}{$u_2$}
		\node[ancestor] (a3) at (1, -4) {$a_3$};
		\node[unrelated] (u3) at (3, -4) {$u_3$};
		
		\drawUnrelatedTree{u4}{0}{-7}{$u_4$}
		\node[ancestor] (a4) at (3, -6) {$a_4$};
		
		\node[nodo] (xi) at (3, -8) {$x_i$};
		
		\drawUnrelatedTree{r1}{6}{0}{$u_5$} 
		\drawUnrelatedTree{r2}{10}{0}{$u_6$} 
		
		\node[draw=none, fill=none] (hi) at (3, -10) {};
		
		\path [->] (a1) edge (u1);
		\path [->] (a1) edge (a2);
		\path [->] (a2) edge (u2);
		\path [->] (a2) edge (a3);
		\path [->] (a2) edge (u3);
		\path [->] (a3) edge (u4);
		\path [->] (a3) edge (a4);
		\path [->] (a4) edge (xi);      
		\path [->, teal] (xi) edge[decorate, decoration={snake}] (hi);
		
		% Merge box
		\node[mergebox] (merge) at (9,-5) {
			Merge unrelated\\
			equivalence classes
		};
		
		% Dashed arrows from unrelated roots
		\draw[->, dashed, purple!70!black, thick] (u1) -- (merge.west);
		\draw[->, dashed, purple!70!black, thick] (u2) -- (merge.west);
		\draw[->, dashed, purple!70!black, thick] (u3) -- (merge.west);
		\draw[->, dashed, purple!70!black, thick] (u4) -- (merge.west);
		\draw[->, dashed, purple!70!black, thick] (r1) -- (merge.west);
		\draw[->, dashed, purple!70!black, thick] (r2) -- (merge.west);
		
	\end{tikzpicture}
	\caption{Graph partition into ancestors, descendants and unrelated nodes with respect to $x_i$, the feature for which we want to calculate the ASV. The unrelated rooted subtrees can be processed independently and then merged to construct the full equivalence classes.}
	\label{fig:ASV_forest_example}
\end{figure}

\vspace{-0.5cm}

We have a general bound on the size of $\equivalenceClasses$ depending only on the causal graph $C$:

\begin{proposition}\label{prop:bound}
	Let $C$ be a causal graph shaped as a rooted directed tree with $l$ leaves and depth $h$. Then, it holds that $|\equivalenceClasses|\leq h^l + 1$	
\end{proposition}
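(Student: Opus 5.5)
The plan is to describe each set $\pi_{<x}$ explicitly, and then encode it injectively by a tuple with one coordinate per leaf. Since $|\equivalenceClasses|$ equals the number of distinct sets $\pi_{<x}$ with $\pi\in topos(C)$, it suffices to count these sets.

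\textbf{Step 1 (characterizing $\pi_{<x}$).} I would first show that a set $S \subseteq X\setminus\{x\}$ equals $\pi_{<x}$ for some topological sort $\pi$ if and only if two conditions hold. First, $S$ contains all ancestors of $x$ and no descendant of $x$. Second, $S$ is closed under taking parents: if $v\in S$ and $v$ is not the root, then $p(v)\in S$.
\begin{itemize}
\item Necessity is immediate from the definition of a topological order.
\item For sufficiency, list $S$ in any topological order of the induced subgraph, then place $x$, then the remaining nodes in any topological order. This is valid because the parent of $x$ and every parent of a node of $S$ lie in $S$.
\end{itemize}

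\textbf{Step 2 (the case where $x$ is the root).} Here every other node is a descendant of $x$, so $\pi_{<x}=\emptyset$ always. Hence $|\equivalenceClasses|=1\le h^l+1$. This case accounts for the ``$+1$'' in the bound.

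\textbf{Step 3 (the case where $x$ is not the root).} Here the root is an ancestor of $x$, so it lies in every admissible $S$. A parent-closed set $S$ in a rooted tree is a union of root-to-node paths. For every leaf $\ell$, the set $S\cap P_\ell$, where $P_\ell$ is the root-to-$\ell$ path, is therefore a prefix of $P_\ell$. Moreover, $S=\bigcup_\ell (S\cap P_\ell)$, since every node lies on some root-to-leaf path.

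Define $f(S)=(|S\cap P_\ell|)_{\ell \text{ leaf}}$. Then $f$ is injective on admissible sets, because each prefix is determined by its length. Each coordinate lies in $\{1,\dots,h\}$ when the depth $h$ is measured as the number of nodes on a longest root-to-leaf path. The lower bound $1$ holds because the root is always in $S$. This gives $|\equivalenceClasses|\le h^l$.

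\textbf{Main obstacle: the depth convention.} The delicate point is fixing the convention for depth so that the count is exactly $h$ per leaf rather than $h+1$. If depth were counted in edges, the bound would fail. For example, take a star with three leaves and let $x$ be a leaf. Each of the other two leaves may or may not be in $S$, giving $4$ classes, while $1^3+1=2$. So I would state explicitly that $h$ counts nodes on a longest path, equivalently edges plus one.

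\textbf{Possible refinement (not needed).} One could note that leaves whose path passes through $x$ have a fixed coordinate, so only the unrelated subtrees contribute, in line with Figure~\ref{fig:ASV_forest_example}. This only improves the bound, so the simple per-leaf count is enough for the proposition.
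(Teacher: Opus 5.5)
Your proof is correct, and it takes a different route from the paper.

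\textbf{The paper's route.} The paper works locally. It isolates the unrelated subtrees rooted at $u_1,\ldots,u_k$ that hang off the ancestor path of $x$. For each of them it counts the admissible intersections with $\pi_{<x}$ via the recursion $f(y)=1+\prod_{z\text{ child of }y}f(z)$, with $f(y)=2$ at a leaf. It then proves $f(y)\le h_y^{l_y}+1$ by induction and multiplies, using $|\Lambda^x_C|=\prod_i f(u_i)$.

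\textbf{Your route.} You work globally. You characterize the admissible sets $\pi_{<x}$ as exactly the parent-closed sets containing all ancestors of $x$ and no descendant. You then encode each such set injectively by its vector of prefix lengths along the root-to-leaf paths.

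\textbf{What your approach buys.} It avoids induction entirely. Your Step~1 proves realizability explicitly by constructing a topological sort. The paper leaves this implicit: it asserts $|\Lambda^x_C|=\prod_i f(u_i)$ ``by the definition of $f$'', which requires that the choices in different unrelated subtrees can be realized simultaneously by one sort. Your argument also gives the slightly sharper bound $|\Lambda^x_C|\le\prod_\ell |P_\ell|\le h^l$. For the same reason, the root case does not really ``account for'' the $+1$, since $1\le h^l$ as well; in the paper the $+1$ comes from the form of the inductive hypothesis on $f$. Your node-counting convention for $h$ is the one the paper uses implicitly (it takes $h_y=1$ at a leaf), and your star example correctly shows that this convention is necessary.

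\textbf{What the paper's approach buys.} The recursion gives an exact formula for $|\Lambda^x_C|$, with a recursive structure over the unrelated subtrees. The algorithm of Proposition~\ref{prop:algo} is built on that structure. Your injection yields only an upper bound.
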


Proposition~\ref{prop:bound} indicates that the number of equivalence classes will be small when the tree is shallow. From the point of view of the causal graph, this corresponds to the case in which there aren't long causal dependencies between the features. In particular, there will be many features that are mutually independent.

For the case of directed trees we further develop an algorithm able to compute the set $\equivalenceClasses$ as well as the number of permutations inside each class (which is required to compute Eq.~\eqref{eq:equivalence_classes_into_shap}). The complexity of the algorithm
is polynomial on $|\equivalenceClasses|$.

\begin{proposition}\label{prop:algo}
	There is an algorithm able to compute $|[\pi]|$ for each $[\pi] \in \Lambda^x_C$ given feature $x$ and the causal graph $C$ (shaped as a rooted directed tree) on time polynomial on $|C|$ and $|\equivalenceClasses|$.
\end{proposition}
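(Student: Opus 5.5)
We describe an algorithm that enumerates each equivalence class of $\equivalenceClasses$, represented by its set $\pi_{<x}$, together with the number of topological orders in it. The proof of correctness and running time rests on a counting identity for topological orders of forests and on a dynamic program over the unrelated subtrees.

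\textbf{Step 1: structure of $\pi_{<x}$.} Let $A$ be the ancestors of $x$ ($|A|=a$) and $D$ its descendants. The unrelated nodes form a forest of rooted subtrees $T_1,\dots,T_k$: each $T_j$ hangs from an ancestor of $x$, or is the tree of another root when $C$ is a forest, as in Figure~\ref{fig:ASV_forest_example}. In any toposort, every node of $A$ precedes $x$ and every node of $D$ follows it. The set of unrelated nodes preceding $x$ is a union $\bigcup_j S_j$, where each $S_j$ is a downward-closed subset (an order ideal, possibly empty) of $T_j$. Conversely, every such choice of ideals is realized by some toposort: place $A$ in root-to-$x$ order, then the $S_j$ topologically, then $x$, then the rest. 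Indeed, the parent of the root of $T_j$ lies in $A$ and always precedes $x$. Hence $\equivalenceClasses$ is in bijection with $\prod_j \mathcal{I}(T_j)$, where $\mathcal{I}(T)$ denotes the set of ideals of $T$. In particular $|\equivalenceClasses|=\prod_j|\mathcal{I}(T_j)|$.

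\textbf{Step 2: counting a class.} Fix $S=A\cup\bigcup_j S_j$. A toposort in the class of $S$ is a toposort of $C[S]$, followed by $x$, followed by a toposort of $C[X\setminus(S\cup\{x\})]$. These two parts are independent, so
\[
|[\pi]| = |topos(C[S])|\cdot|topos(C[X\setminus (S\cup\{x\})])|.
\]
Both induced subgraphs are rooted forests, because ideals and complements of ideals in trees induce forests. For this we use the classical hook-length formula for forests: a forest $F$ on $m$ nodes has $m!/\prod_{v} h_v$ topological orders, where $h_v$ is the size of the subtree rooted at $v$ within $F$. Each class can therefore be counted in $O(|C|)$ arithmetic operations, on numbers of polynomial bit-length.

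\textbf{Step 3: enumerating the ideals.} We generate $\mathcal{I}(T)$ recursively. An ideal of a tree rooted at $r$ is either empty, or consists of $r$ together with an ideal of each child subtree. The ideals are therefore produced by a Cartesian product of the children's lists. Since every produced ideal is distinct and becomes part of some class, the work is polynomial in $|C|$ times the output size. Combining over $T_1,\dots,T_k$ by another Cartesian product yields exactly the $|\equivalenceClasses|$ classes. Computing the product formula of Step 2 for each class then gives total time $O(|C|\cdot|\equivalenceClasses|)$, up to the cost of arithmetic.

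\textbf{Main obstacle.} The delicate point is bounding the enumeration cost of Step 3 by the output size. Intermediate lists for a subtree $T_j$ must not exceed the final count, and they do not: $|\mathcal{I}(T_j)|$ divides into the product giving $|\equivalenceClasses|$, and each factor is at least $1$. A careful version can avoid Cartesian products entirely by letting the dynamic program carry pairs $(\text{ideal},\text{partial hook product})$ and multiplying counts incrementally. Here the hook formula is what makes the counts decompose multiplicatively over subtrees.
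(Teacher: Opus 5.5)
Your proof is correct, and the enumeration half matches the paper's: your ideals of $T_j$ are exactly the sets $B_j$ produced by the recursion \eqref{eq:fun_for_equi_classes}, and your bijection $\Lambda^x_C \cong \prod_j \mathcal{I}(T_j)$ is Eq.~\eqref{eq:Lambda}.

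The counting half takes a different route. The paper merges the classes of the unrelated subtrees and then counts the interleavings of the chosen unrelated nodes with the ancestor chain, using nodes of $B_j$ that cannot precede the common ancestor of $u_j$ and $x$. This step is left to an unspecified dynamic program based on ``standard counting techniques''. You instead observe that $S\cup\{x\}$ is closed under predecessors. Hence every toposort in the class is an arbitrary toposort of $C[S]$, then $x$, then an arbitrary toposort of the forest on $R=X\setminus(S\cup\{x\})$. The hook-length formula then gives the closed form $|[\pi]| = \frac{|S|!}{\prod_{v\in S} h^S_v}\cdot\frac{|R|!}{\prod_{v\in R} h^R_v}$, with $h^S_v$ and $h^R_v$ the subtree sizes in $C[S]$ and $C[R]$. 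The paper's interleaving constraint with the ancestors is simply absorbed into the hook lengths of the tree $C[S]$.

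What your route buys is an explicit, easily checked algorithm with an $O(|C|\cdot|\Lambda^x_C|)$ bound on arithmetic operations. A recursive merge as in the paper might share partial counts across classes, which your incremental variant with partial hook products also achieves.

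One detail worth adding to Step 3: the lists for nested subtrees inside $T_j$ are also bounded by $|\mathcal{I}(T_j)|$. An ideal $I'$ of the subtree rooted at $v$ maps injectively to the ideal $P\cup I'$ of $T_j$, where $P$ is the path from the root of $T_j$ to the parent of $v$.
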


We note that once the set $\equivalenceClasses$ is computed it is possible to find any value $\phi_{M,e}^{ASV}(x)$ in time linear on $|\equivalenceClasses|$. Thus, even though this step can be costly
(because the bound from Proposition~\ref{prop:bound} implies that the algorithm may take exponential time on the size of the causal graph) is must be done only once. Moreover, it is independent of the model $M$, and thus it can be reused even
when the model is updated.

\vspace{-0.3cm}

\subsection{Approximating ASV through sampling}

\newcommand{\subsetPerm}{K}

A simple way to compute $\phi_{M,e}^{Shap}(x)$ consist on sampling a small subset $\subsetPerm \subseteq perm(X)$ at random and then approximating the score as
\begin{align*}
	\phi^{Shap}_{M,e}(x) \sim \frac{1}{|\subsetPerm|} \sum_{\pi \in \subsetPerm} \left[ \nu_{M,e}(\pi_{<x} \cup \{x\}) - \nu_{M,e}(\pi_{<x}) \right]
\end{align*}

For the ASV the analogous procedure consists on sampling topological orderings at random. In the next statement we formalize this strategy. Moreover, we observe that the algorithm is robust to error both at the level of the sampling procedure and the computation of $\nu_{M,e}$.

\newcommand{\procedureSampling}{\mathcal{P}_{samp}}
\newcommand{\procedureNu}{\mathcal{P}_{\nu}}

\begin{proposition}\label{prop:sampling_to_compute_ASV}
	Assume access to a procedure $\procedureSampling$ that samples topological orderings from a DAG with a distribution $\varepsilon_{samp}$-close to the uniform one\footnote{To measure distance between distributions we use the $1$-norm, i.e. given two distributions $\mathcal{D}_1$ and $\mathcal{D}_2$ over a finite domain $\Omega$ we define the distance between $\mathcal{D}_1$ and $\mathcal{D}_2$ as $||\mathcal{D}_1 - \mathcal{D}_2||_1 = \sum_{o \in \Omega} |\mathcal{D}_1(o) - \mathcal{D}_2(o)|$.},
	and to a procedure $\procedureNu$ that computes, given any subset of features $S \subseteq X$, a $\varepsilon_{\nu}$-approximation of the value $\nu_{M,e}(S)$. Then, it is possible to compute, with high probability, a $O(\varepsilon_{samp} + \varepsilon_{\nu})$-approximation of $\phi^{ASV}_{M,e}(x)$ using $O(1/\varepsilon_{\nu}^2)$ calls to $\procedureSampling$ and $\procedureNu$. 
\end{proposition}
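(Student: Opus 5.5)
The plan is a Monte Carlo estimator combined with a triangle-inequality decomposition of the error into a sampling-bias term, an oracle-error term, and a statistical-fluctuation term. Define, for each topological ordering $\pi$, the bounded quantity $\Delta(\pi) = \nu_{M,e}(\pi_{<x}\cup\{x\}) - \nu_{M,e}(\pi_{<x})$. Since $M$ is binary, $\nu_{M,e}$ takes values in $[0,1]$, so $\Delta(\pi)\in[-1,1]$. By Definition~\ref{def:asv} with the weight function of Eq.~\eqref{eq:w_in_ASV}, $\phi^{ASV}_{M,e}(x) = \mathbb{E}_{\pi\sim U}[\Delta(\pi)]$, where $U$ is uniform over $topos(C)$.

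The algorithm draws $m$ independent orderings $\pi^1,\dots,\pi^m$ using $\procedureSampling$. For each $\pi^j$ it calls $\procedureNu$ twice, obtaining values $\tilde\nu(\pi^j_{<x}\cup\{x\})$ and $\tilde\nu(\pi^j_{<x})$, and forms $\tilde\Delta(\pi^j)$ as their difference. It outputs the average $\hat\phi = \frac{1}{m}\sum_j \tilde\Delta(\pi^j)$. We may clip each $\tilde\nu$ to $[0,1]$, which only reduces its error, so that $\tilde\Delta\in[-1,1]$.

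The error is split as
\[
|\hat\phi - \phi^{ASV}_{M,e}(x)| \le \Bigl|\hat\phi - \mathbb{E}_{\mathcal{D}}[\tilde\Delta]\Bigr| + \Bigl|\mathbb{E}_{\mathcal{D}}[\tilde\Delta] - \mathbb{E}_{\mathcal{D}}[\Delta]\Bigr| + \Bigl|\mathbb{E}_{\mathcal{D}}[\Delta]-\mathbb{E}_{U}[\Delta]\Bigr|,
\]
where $\mathcal{D}$ is the output distribution of $\procedureSampling$. The three terms are bounded as follows.

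\emph{Sampling bias (third term).} Since $|\Delta|\le 1$, we have $|\sum_\pi(\mathcal{D}(\pi)-U(\pi))\Delta(\pi)| \le \|\mathcal{D}-U\|_1 \le \varepsilon_{samp}$.

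\emph{Oracle error (second term).} Pointwise $|\tilde\Delta(\pi)-\Delta(\pi)|\le 2\varepsilon_\nu$, so this term is at most $2\varepsilon_\nu$.

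\emph{Statistical fluctuation (first term).} Here we need care. If $\procedureNu$ is itself randomized, we define $\tilde\Delta$ as a random variable jointly over the sample and the oracle's coins. The $m$ terms are still i.i.d. and lie in $[-1,1]$, so Hoeffding's inequality gives deviation at most $\varepsilon_\nu$ with probability $1-2e^{-m\varepsilon_\nu^2/2}$. Taking $m = O(\log(1/\delta)/\varepsilon_\nu^2)$, which is $O(1/\varepsilon_\nu^2)$ for any constant failure probability $\delta$, suffices.

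If $\procedureNu$ only guarantees $\varepsilon_\nu$-accuracy with high probability, we union-bound over its $2m$ calls. This requires each call to succeed with probability $1-O(\delta/m)$; that is typically achieved by median amplification, or simply assumed. Summing the three bounds gives total error $\varepsilon_{samp}+3\varepsilon_\nu = O(\varepsilon_{samp}+\varepsilon_\nu)$, using $m$ calls to $\procedureSampling$ and $2m$ calls to $\procedureNu$.

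The main obstacle is the interaction between the randomness of $\procedureNu$ and the concentration step. We need the oracle's errors to be bounded per call, not merely on average, and we must avoid a hidden dependence on $m$ in its failure probability. I would first handle this by modeling $\procedureNu$ as deterministic-with-error, or as succeeding on all calls via a union bound. Only if that proves too restrictive would I treat $\tilde\Delta$ as a bounded random variable whose expectation is within $2\varepsilon_\nu$ of $\Delta$, which needs only bounded bias rather than worst-case accuracy.
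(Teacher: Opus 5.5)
Your proof is correct and is essentially the paper's argument. Both use the same Monte Carlo estimator and the same triangle-inequality split into three terms: a sampling-bias term bounded by $\|\mathcal{D}-U\|_1\le\varepsilon_{samp}$ (using $|\Delta|\le 1$), a Hoeffding term of size $\varepsilon_\nu$ with $O(1/\varepsilon_\nu^2)$ samples, and an oracle term of size $2\varepsilon_\nu$.

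The only difference is the order of the last two steps. The paper applies Hoeffding to the exact values $\Delta(\pi^j)$ and then bounds the oracle error pointwise on the drawn sample. You instead apply Hoeffding to the (clipped) oracle outputs $\tilde\Delta$ and bound the oracle error in expectation, which is why your variant only needs an oracle with bounded bias. For the same reason, the union bound over the $2m$ calls that you mention is really the tool for the paper's ordering, not yours.
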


\newcommand{\randomVar}{V^{M,e}}

Proposition~\ref{prop:sampling_to_compute_ASV} states that we only need to be able to sample the topological orderings \textit{almost} uniformly to compute the ASV. Also, it shows that if we approximate $\nu_{M,e}$ the overall \textit{additive}-error of the estimation
of $\phi^{ASV}_{M,e}(x)$ is also controlled. Note that implementing the procedure $\procedureNu$ is often easy: by definition $\nu_{M,e}(S)$ is the expected value of a random variable, and thus if we can sample from the set of entities after conditioning that $s = e(s)$ for every $s \in S$ we are done. This is the case for the product distribution, but also for any bayesian networks whose underlying graph is a polytree, because then the inference process can be implemented in polynomial time. 

Now we describe two ways to implement the procedure $\procedureSampling$. The first one can be understood by looking at Figure \ref{fig:topoSortSamplingExample}. The idea is that we can sample a topological ordering uniformly using a recursive procedure in which at each step we pick the left-most node of the ordering. In particular, in the beginning we must pick one of the sources of the causal graph (i.e. one of the nodes with in-degree 0). We should weight them according to the number of topological orderings that start with each respective node, and thus we effectively reduce the problem of sampling a topological ordering to the problem of counting the number of topological orderings. Even though this problem is intractable in general \parencite{countingLinearExtensions}, there are heuristic algorithms that can approximate this value or even exact algorithms for restricted cases \parencite{efficientCountingOfToposorts}. 

In particular, we develop a simple algorithm to count topological orders applicable to polytrees whose complexity depends on the maximum degree of a node. 
This algorithm computes the number of topological orderings of a polytree by performing a DFS over its underlying undirected graph and recursively combining the results of each node’s unvisited neighbors. Instead of only counting the number of orderings of each subtree, the algorithm keeps track of the positions in which a node may appear, together with how many topological orderings realize each position (this additional information is necessary to correctly handle nodes with multiple parents). At each step, the algorithm enumerates the valid relative orders of the current node and its unvisited neighbors, combines the recursive results of those neighbors, and merges equal positions. In this way, it computes all possible placements of the node and the corresponding number of topological orderings, yielding an exact counting algorithm for arbitrary polytrees.

\begin{figure}[ht]
	\centering
	\begin{tikzpicture}[scale=.6]
		% Define the rigth set of nodes
		\foreach \i in {1,2,3,4,5}
		\node[draw=none, circle, minimum size=5mm, inner sep=0pt] (L\i) at (4, -\i) {};
		
		% Define the left set of nodes
		\foreach \j in {1,2,3}
		\node[draw, circle, red, minimum size=5mm, inner sep=0pt] (source\j) at (0, -\j*2) {\scalebox{0.6}{$Source_\j$}};
		
		% Draw edges between nodes (example edges)
		\foreach \i in {1,2}
		\foreach \j in {1,2}
		\draw[->]  (source\j) -- (L\i); 
		
		\foreach \i in {4,5}
		\foreach \j in {2,3}
		\draw[->]  (source\j) -- (L\i); 
		
		\draw[->]  (source2) -- (L3);

		\draw [decorate, blue, decoration={random steps, segment length=10pt, amplitude=2pt}, thick]
		(4,-3) circle (2.4);
		
		%Number of órdenes topológicos for each node
		
		\node[draw=none,minimum size=3mm, inner sep=0pt] () at (0, -1) {\small \textcolor{orange}{$n_1$=100}};
		
		\node[draw=none,minimum size=3mm, inner sep=0pt] () at (0, -3) {\small \textcolor{orange}{$n_2$=300}};
		
		\node[draw=none,minimum size=3mm, inner sep=0pt] () at (0, -5) {\small \textcolor{orange}{$n_3$=200}};
	\end{tikzpicture}
	\caption{Illustration of the first step of the sampling algorithm. The candidate source nodes that may appear first in the ordering are shown in red. Each source node $s$ is annotated with the number of topological orderings $n_s$ in which it appears first. We will assign a probability to each $s$ as $n_s / topos(C)$.}
	\label{fig:topoSortSamplingExample}
\end{figure}

\vspace{-0.5cm}

Alternatively, we consider known algorithms from the literature able to approximate uniform sampling of topological orderings from arbitrary DAGs. We implement an algorithm based on random walks \parencite{bubley1999faster} with a polynomial (although of high degree) complexity. We note that there are more modern and asymptotically better algorithms \parencite{HUBER2006420}, but we believe that the hidden constants in them make them inefficient for the input sizes we consider.

\section{Experimental results}

We evaluate the proposed methods by comparing them against naive baselines in terms of computational cost and scalability. 
Experiments were conducted on two real Bayesian networks, \cancerNetwork{} and \childNetwork{}, obtained from the \texttt{bnlearn} repository. Both were used as data distributions and as causal DAGs. The \childNetwork{} network was modified into a polytree by removing edges that introduce cycles in the underlying undirected graph, where we computed the new conditional tables through marginalization.
Datasets were generated by sampling from these networks, and decision trees were trained accordingly. For ASV computation, the prediction variable was removed from the network to avoid trivial inference. To analyze the performance of the algorithms that compute the equivalence classes of topological sorts we also created artificial networks shaped as (1) Balanced binary trees, (2) Naive-bayes distributions, and (3) Trees composed by merging directed paths on their root (what we call \texttt{multiplePaths}).\footnote{All experiments were run on an Intel i5-7500 CPU @ 3.40GHz with 16 GB of RAM, using Ubuntu 22.04 and Python 3.12. The main libraries used were \texttt{pgmpy}, \texttt{sklearn}, \texttt{networkx}, and \texttt{shap}.}

\vspace{-3mm}

\paragraph{Equivalence Classes vs. Topological Orders} We compare the naive ASV computation based on iterating all topological orders with our approach based on equivalence classes. The key quantities are the sizes of $\topo(G)$ and $\Lambda^x_{G}$, as well as the cost of constructing each set.

\begin{figure}[ht]
	\centering
	\begin{subfigure}[b]{0.49\linewidth}
		\centering
		\includegraphics[width=\linewidth]{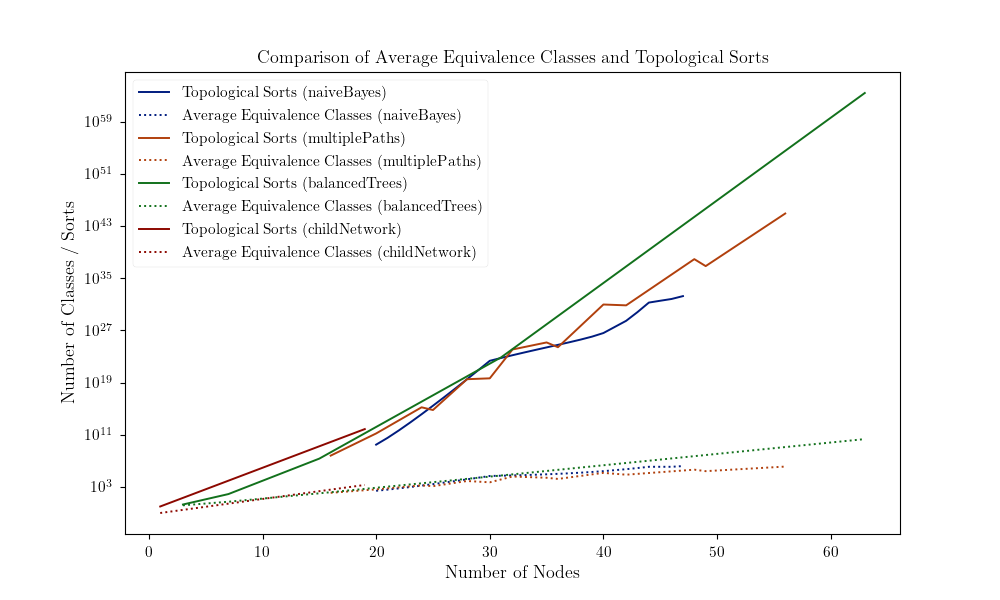}
		\caption{Number of equivalence classes vs. topological orders.}
		\label{fig:equivalenceClassesVsToposortsNumberPlot}
	\end{subfigure}
	\hfill
	\begin{subfigure}[b]{0.49\linewidth}
		\centering
		\includegraphics[width=\linewidth]{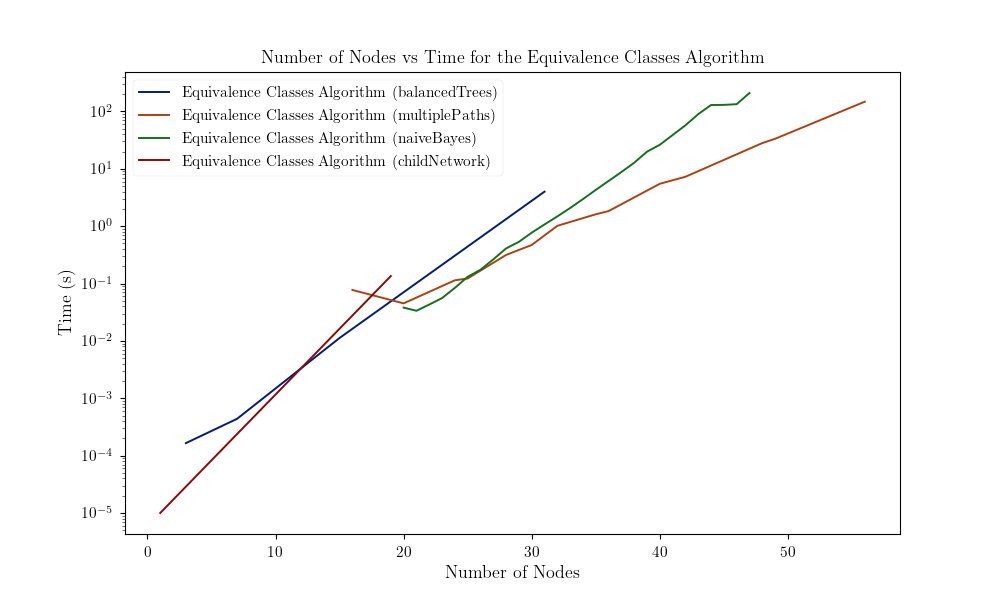}
		\caption{Runtime for computing equivalence classes.}
		\label{fig:equivalenceClassesTimePlot}
	\end{subfigure}
	\label{fig:eq_vs_topo_combined}
\end{figure}

Figure~\ref{fig:equivalenceClassesVsToposortsNumberPlot} shows that the number of equivalence classes grows significantly slower than the number of topological orders. For instance, in \childNetwork{}, the graph has $7.41\times10^{11}$ topological orders but only around $2003$ equivalence classes on average\footnote{We mean the average value of $|\Lambda^x_G|$ over the set of features.}. Similar gaps are observed in balanced trees, where the difference can reach several orders of magnitude.
At the same time, the cost of computing equivalence classes remains moderate for graphs of practical size, typically under a few seconds. For example, in \childNetwork{}, all equivalence classes are computed in less than one second.
This reduction directly decreases the number of evaluations of the characteristic function, making the equivalence class approach substantially more efficient.

\vspace{-3mm}

\paragraph{Performance of Equivalence Class Algorithms} We now analyze the cost of computing equivalence classes.
As shown in Figure~\ref{fig:equivalenceClassesTimePlot}, the computation remains efficient for moderate graph sizes, typically under a few seconds. However, as graph size increases, the runtime grows alongside the number of equivalence classes.
We can see in Figure~\ref{fig:timeVsEquivClasses} that the runtime is strongly correlated with the number of equivalence classes. This is useful not only for explaining the observed growth, but also from a practical point of view. Since we also have a procedure to count the number of equivalence classes beforehand, we can use it as a rough predictor of the computational cost of the exact algorithm before actually running it.

\begin{figure}[ht]
	\centering
	
	\begin{subfigure}[t]{0.48\linewidth}
		\vspace{0pt}
		\centering
		\includegraphics[width=\linewidth]{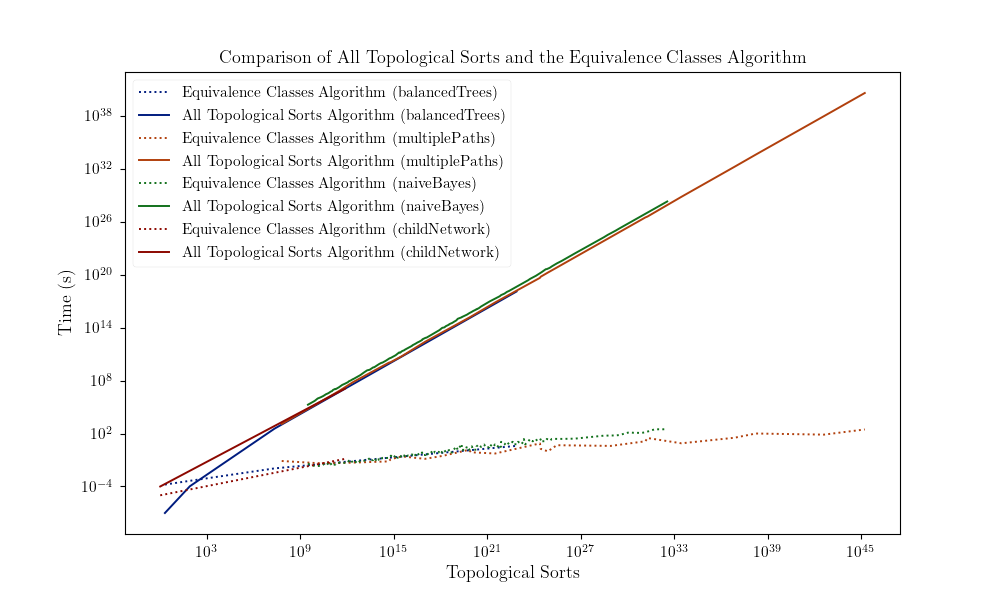}
		\caption{Runtime comparison between computing equivalence classes and enumerating all topological orders.\footnotemark}
		\label{fig:equivalenceClassesVsToposortsTimePlot}
	\end{subfigure}
	\hfill
	\begin{subfigure}[t]{0.48\linewidth}
		\vspace{0pt}
		\centering
		\includegraphics[width=\linewidth]{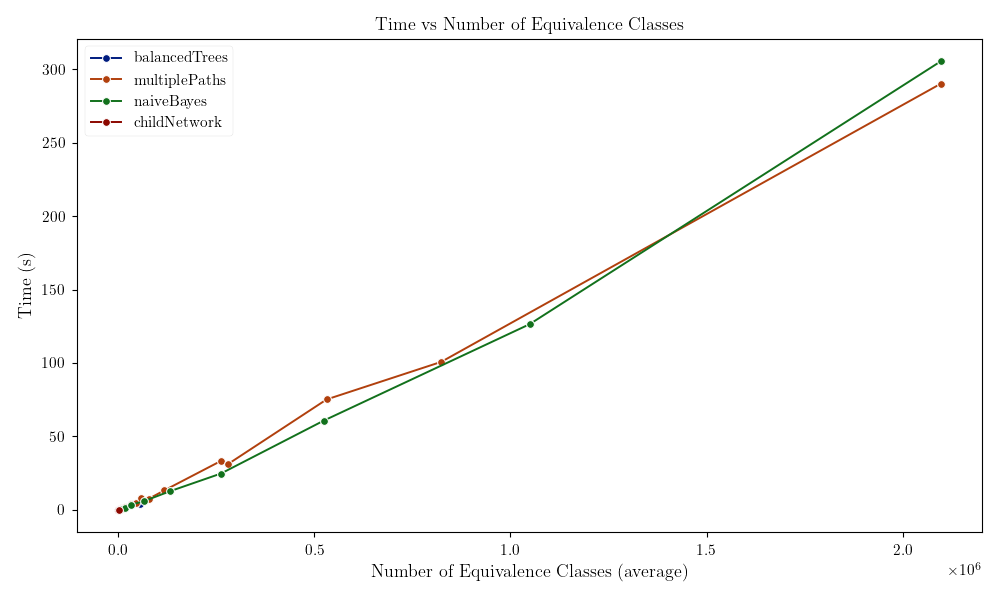}
		\caption{Runtime as a function of the number of equivalence classes.}
		\label{fig:timeVsEquivClasses}
	\end{subfigure}
	\label{fig:runtime_experiments}
\end{figure}
\footnotetext{To estimate runtime, we measured the time required to generate the first 1000 topological orders and extrapolated using the total number of orders in each graph, since full execution was not computationally feasible.}

\vspace{-0.4cm}

Figure~\ref{fig:equivalenceClassesVsToposortsTimePlot} highlights the main advantage of our approach. While enumerating all topological orders quickly becomes infeasible (e.g., requiring days for large graphs), computing equivalence classes remains tractable.

\vspace{-3mm}

\paragraph{Sampling-Based Approximation of Topological Orders} We now evaluate the sampling-based approach.

\begin{figure}[H]
	\centering
	
	\begin{subfigure}[t]{0.32\linewidth}
		\vspace{0pt}
		\centering
		\small
		\setlength{\tabcolsep}{4pt}
		\renewcommand{\arraystretch}{1.1}
		\begin{tabular}{|r|r|}
			\hline
			\textbf{\# Samples} & \textbf{Time (s)}\\
			\hline
			100   & 0.9270 \\
			1000  & 1.1170 \\
			10000 & 4.7170 \\
			20000 & 7.7170 \\
			30000 & 10.8387 \\
			\hline
		\end{tabular}
		\caption{Sampling times for topological orders in \childNetwork{} using our developed algorithm.}
		\label{fig:samplingTimesTopoSorts}
	\end{subfigure}
	\hfill
	\begin{subfigure}[t]{0.64\linewidth}
		\vspace{0pt}
		\centering
		\includegraphics[width=0.6\linewidth]{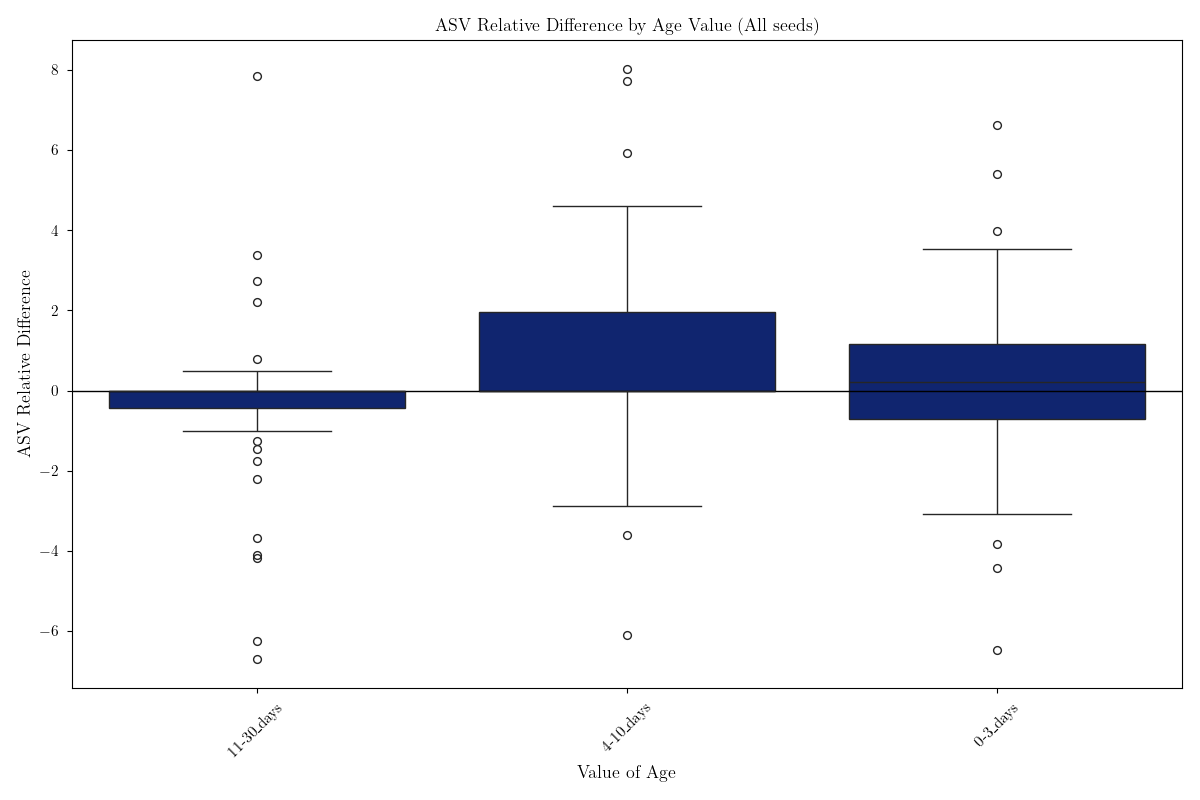}
		\caption{Relative error between approximate and exact ASV values in \childNetwork{}.}
		\label{fig:boxplotASVApproximateDifferences}
	\end{subfigure}
	\label{fig:sampling_results_combined}
\end{figure}

\vspace{-0.4cm}

Table~\ref{fig:samplingTimesTopoSorts} shows that when using our algorithm the sampling remains efficient even for large numbers of samples. Moreover, the runtime is not linear in the number of samples, as multiple candidates are generated simultaneously during recursive calls due to our efficient implementation.
We conclude that in the bounded-degree setting (observe that in all the examples considered here most nodes have small degree) our algorithm is more efficient than the alternative options such as \cite{HUBER2006420}. Nevertheless, we recall that their algorithm is more general, as it applies beyond polytrees and runs in polynomial time.

Finally, we evaluate the approximation error of ASV using sampled orders. Figure~\ref{fig:boxplotASVApproximateDifferences} shows that using 1000 samples yields an average relative error of around $8\%$. Larger ASV values exhibit smaller relative error, while larger deviations correspond to very small true values.
These results indicate that sampling provides a practical trade-off between accuracy and computational cost.

\vspace{-3mm}

\paragraph{Remarks on Mean Prediction Computation} In our experiments, the cost of evaluating the characteristic function is not the dominant factor. This happens because for decision trees under Bayesian Network distributions with a polytree structure mean prediction can be computed exactly in an efficient way. In general models this might not be the case, but nonetheless due to Proposition~\ref{prop:sampling_to_compute_ASV} if sampling is used instead of exact computation the overall error of the approximation should not be extremely bad.

\section{Conclusions and open questions}

In this work, we studied \textit{Asymmetric Shapley Values} (ASV), a variant of Shapley values that incorporates causal information into feature attribution. We analyzed the computational complexity of ASV and showed tractability results for restricted settings in which computing SHAP was intractable. To make ASV more practical, we developed exact and approximate algorithms based on equivalence classes of topological orderings. Our experiments show that these techniques improve the performance for computing the score and yield accurate approximations.

Overall, our results suggest that ASV is a promising explainability framework when causal structure is relevant, since it can capture dependencies that standard SHAP does not reflect. Our experiments also indicate that ASV can be implemented effectively for several networks, although its scalability depends strongly on the causal graph. While equivalence classes can greatly reduce the number of topological orderings considered, they may still grow quickly in large or dense DAGs, where the tree or polytree assumptions used by the exact algorithms no longer apply and inference can become more expensive. In these cases, the method may need to rely on sampling-based approximations, whose accuracy depends on both topological-order sampling and conditional-expectation estimation. Thus, further optimization is needed for very large or highly connected graphs.

Several directions remain open for future work. On the theoretical side, it would be interesting to generalize the equivalence class algorithm from directed trees to polytrees, and to further study the complexity of computing the ASV in other cases. On the practical side, possible improvements include implementing alternative counting algorithms from the literature, optimizing the current implementation through caching or parallelization, and extending the framework to more general causal models beyond Bayesian networks.

\begin{credits}

\subsubsection{\discintname}The authors have no competing interests to declare that are
relevant to the content of this article.
\end{credits}
%
% ---- Bibliography ----
%
\printbibliography

@article{shapley1953value,
  title={A value for n-person games},
  author={Shapley, Lloyd S and others},
  journal={Classics in game theory},
  year={1953},
  publisher={Princeton University Press Princeton}
}

@article{fryer2021shapley,
  title={{Shapley} values for feature selection: The good, the bad, and the axioms},
  author={Fryer, Daniel and Str{\"u}mke, Inga and Nguyen, Hien},
  journal={Ieee Access},
  volume={9},
  pages={144352--144360},
  year={2021},
  publisher={IEEE}
}

@article{huang2023inadequacy,
  title={The inadequacy of {Shapley} values for explainability},
  author={Huang, Xuanxiang and Marques-Silva, Joao},
  journal={arXiv preprint arXiv:2302.08160},
  year={2023}
}

@incollection{marques2023logic,
  title={Logic-based explainability in machine learning},
  author={Marques-Silva, Joao},
  booktitle={Reasoning Web. Causality, Explanations and Declarative Knowledge: 18th International Summer School 2022, Berlin, Germany, September 27--30, 2022, Tutorial Lectures},
  pages={24--104},
  year={2023},
  publisher={Springer}
}

@article{van2022tractability,
  title={On the tractability of {Shap} explanations},
  author={Van den Broeck, Guy and Lykov, Anton and Schleich, Maximilian and Suciu, Dan},
  journal={Journal of Artificial Intelligence Research},
  volume={74},
  pages={851--886},
  year={2022}
}

@article{arenas2023complexity,
  title={On the complexity of {Shap}-score-based explanations: Tractability via knowledge compilation and non-approximability results},
  author={Arenas, Marcelo and Barcel{\'o}, Pablo and Bertossi, Leopoldo and Monet, Mika{\"e}l},
  journal={Journal of Machine Learning Research},
  volume={24},
  number={63},
  pages={1--58},
  year={2023}
}

@article{frye2019asymmetric,
  title={Asymmetric {Shapley} values: incorporating causal knowledge into model-agnostic explainability},
  author={Frye, Christopher and Rowat, Colin and Feige, Ilya},
  journal={arXiv preprint arXiv:1910.06358},
  year={2019}
}

@article{pearl1986bayesianInference,
  title={Fusion, Propagation, and Structuring in Belief Networks},
  author={Pearl, Judea},
  journal={Artificial Intelligence},
  volume={29},
  number={3},
  pages={241-288},
  year={1986},
  publisher={Elsevier Science Publishers B.V. (North-Holland)}
}

@article{Cooper1990,
	author = {Gregory F. Cooper},
	title = {The Computational Complexity of Probabilistic Inference Using Bayesian Belief Networks},
	journal = {Artificial Intelligence},
	volume = {42},
	number = {2--3},
	pages = {393--405},
	year = {1990},
	doi = {10.1016/0004-3702(90)90060-D}
}

@misc{shapOriginalPaper,
      title={A Unified Approach to Interpreting Model Predictions}, 
      author={Scott Lundberg and Su-In Lee},
      year={2017},
      eprint={1705.07874},
      archivePrefix={arXiv},
      primaryClass={cs.AI},
      url={https://arxiv.org/abs/1705.07874}, 
}

@article{countingLinearExtensions,
  author    = {Graham R. Brightwell and Peter Winkler},
  title={Counting Linear Extensions},
  journal   = {Order},
  volume    = {8},
  pages     = {225--242},
  year      = {1991},
  doi       = {10.1007/BF00383444},
  url       = {https://link.springer.com/article/10.1007/BF00383444},
}

@article{MILLER20191,
title={Explanation in {Artificial intelligence}: Insights from the social sciences},
journal = {Artificial Intelligence},
volume = {267},
pages = {1-38},
year = {2019},
issn = {0004-3702},
doi = {https://doi.org/10.1016/j.artint.2018.07.007},
url = {https://www.sciencedirect.com/science/article/pii/S0004370218305988},
author = {Tim Miller}
}

@misc{lipton2017mythosmodelinterpretability,
      title={The Mythos of Model Interpretability}, 
      author={Zachary C. Lipton},
      year={2017},
      eprint={1606.03490},
      archivePrefix={arXiv},
      primaryClass={cs.LG},
      url={https://arxiv.org/abs/1606.03490}, 
}

@article{HUBER2006420,
title = {Fast perfect sampling from linear extensions},
journal = {Discrete Mathematics},
volume = {306},
number = {4},
pages = {420-428},
year = {2006},
issn = {0012-365X},
doi = {https://doi.org/10.1016/j.disc.2006.01.003},
url = {https://www.sciencedirect.com/science/article/pii/S0012365X06000033},
author = {Mark Huber}
}

@inproceedings{efficientCountingOfToposorts,
author = {Kangas, Kustaa and Hankala, Teemu and Niinim\"{a}ki, Teppo and Koivisto, Mikko},
title = {Counting linear extensions of sparse posets},
year = {2016},
isbn = {9781577357704},
publisher = {AAAI Press},
booktitle = {Proceedings of the Twenty-Fifth International Joint Conference on Artificial Intelligence},
pages = {603–609},
numpages = {7},
location = {New York, New York, USA},
series = {IJCAI'16}
}

@article{bubley1999faster,
	title={Faster random generation of linear extensions},
	author={Bubley, Russ and Dyer, Martin},
	journal={Discrete mathematics},
	volume={201},
	number={1-3},
	pages={81--88},
	year={1999},
	publisher={Elsevier}
}

@article{mersha2024explainable,
  title={Explainable artificial intelligence: A survey of needs, techniques, applications, and future direction},
  author={Mersha, Melkamu and Lam, Khang and Wood, Joseph and Alshami, Ali K and Kalita, Jugal},
  journal={Neurocomputing},
  volume={599},
  pages={128111},
  year={2024},
  publisher={Elsevier}
}

\appendix

\section{Appendix}

\begin{proof}[of Theorem~\ref{the:asv_naive_bayes_tractable_clean}]
	First, we show the right-to-left implication, which is the hard one. Let $x_1$ be the root node of the causal graph, and let $x_2,\ldots, x_n$ be the rest of the nodes. The DAG has $(n-1)!$ topological sorts, and in particular it holds that, for any $2 \leq j \leq n$,\footnote{In the proofs we write the characteristic function as $\nu_{M,e,\Pr}$ to make the dependence on the distribution $\Pr$ explicit.}
	\begin{align*}
	\phi^{ASV}_{M,e,\Pr}(x_j) = \frac{1}{(n-1)!} \sum_{\pi \in \perm(\{x_2, \ldots, x_n\})} [ \charactheristicFunction_{M,e,\Pr}(\{x_1\} &\cup \pi_{<j} \cup \{x_j\}) \\
	&- \charactheristicFunction_{M,e,\Pr}(\{x_1\} \cup \pi_{<j}) ]
	\end{align*}
	Note that once $x_1$ is fixed the distribution over $x_2,\ldots, x_n$ is a product distribution with probabilities $\Pr\left( X_j = 1 | X_1 = e(x_1) \right)$ for $2 \leq i \leq n$.
	For simplicity, let's assume $e(x_1) = 1$. Then, if we define the product distribution $\Pr'$ given by
	\[
	\Pr\!'[X_i = 1] = 
	\begin{cases}
		1 & i = 1 \\
		\Pr[X_i = 1 | X_1 = 1] & \text{otherwise}
	\end{cases}
	\]
	it is easy to see that for any $S \subseteq X \setminus \{x_1\}$ it holds that
	\begin{align}\label{eq:prob_prime_for_fixing_x1}
		\nu_{M,e,\Pr}(\{x_1\} \cup S) = \nu_{M,e,\Pr'}(S) 
	\end{align}
	This is intuitive: the distribution $\Pr'$ is obtained by fixing the value of $x_1$ as $e(x_1)$, and thus having $x_1$ inside $\nu$ is equivalent to using the distribution $\Pr'$.
	
	From Eq.~\eqref{eq:prob_prime_for_fixing_x1} it can be proven easily that
	\begin{align}
		\phi^{ASV}_{M,e,\Pr} (x_j) = \phi^{Shap}_{M,e,\Pr'}(x_j)
	\end{align}
	which implies that we effectively reduced the problem of computing the asymmetric Shapley values for any feature $x_j$ with $j\geq 2$ to computing the Shapley values for some product distribution. Finally, for $x_1$ it can be seen that
	\begin{align*}
		\phi^{ASV}_{M,e,\Pr}(x_1) = \nu_{M,e,\Pr{}}(\{x_1\}) - \nu_{M,e,\Pr{}}(\emptyset)
	\end{align*}
	because $\pi_{<1} = \emptyset$ for any permutation consistent with the DAG. Computing the values $\nu_{M,e,\Pr{}}(\{x_1\})$ and $\nu_{M,e,\Pr{}}(\emptyset)$ reduces to computing averages of the model $M$ with some product distributions, and it is known that the capacity of computing 
	the Shapley values for any product distribution entails the capacity of computing averages \parencite{van2022tractability}.
	
	The left-to-right implication is immediate by observing that product distributions are a particular case of Naive Bayes distribution, and thus we can ``simulate'' them: one possible way to do this is
	to consider, for each $1 \leq i \leq n$, the Naive Bayes distribution $\Pr_i$ given by placing $x_i$ as parent node with no conditioning on its children (i.e. $\Pr_i\left[x_j=1 | x_i = 0\right] = \Pr_i\left[x_j=1 | x_i = 1\right]$). Then, it holds that
	\begin{align*}
		\phi^{Shap}_{M,e,\Pr}(x_j) = \frac{1}{n} \sum_{i=1}^n \phi^{ASV}_{M,e,\Pr_i}(x_j)
	\end{align*}
\end{proof}

\begin{proof}[of Proposition~\ref{prop:bound}]
	For every unrelated feature $y$ (in the sense that $y$ is neither a descendant nor ancestor of $x$) let $C_y \subseteq C$ be the directed tree rooted at $y$ induced by the nodes reachable by $y$. Let $f(y)$ denote the number of ways that the features in $C_y$ can be arranged in the permutations, i.e. 
	\begin{align}\label{eq:f}
		f(y) = \big| \{S \subseteq C_y : \exists \pi \in topos(C) \text{ such that } \pi_{<x} \cap C_y = S\} \big|.
	\end{align}
	Note that
	\begin{align}\label{eq:fun_for_equi_classes}
		f(y) = \begin{cases}
			2 & y \text{ is a leaf}\\
			1 + \prod_{z \text{ child of } y} f(z) & \text{otherwise}
		\end{cases}
	\end{align}
	In the base case the set of Eq.~\eqref{eq:fun_for_equi_classes} is $\{\emptyset, \{y\}\}$. For the recursive case the reasoning is the following: either we do not put $y$ in the set (and then all other features reachable from $y$ won't be in the set as well) or rather 
	we put $y$ and have complete freedom to decide which features to put from the subtrees of $y$.

	We prove by induction that $f(y) \leq h_y^{l_y} + 1$ where $l_y$ is the number of leaves of $C_y$ and $h_y$ its depth. This clearly holds whenever $y$ is a leaf itself (in that case, we take $h_y = 1$). For the recursive case, note that
	\begin{align}\label{eq:bounding}
		f(y) &= 1 + \prod_{z \text{ child of } y} f(z)\\
		&\leq 1 + \prod_{z \text{ child of } y} (h_z^{l_z} + 1)\nonumber\\
		&\leq 1 + \prod_{z \text{ child of } y} (h_z+1)^{l_z}\nonumber\\
		&\leq 1 + \prod_{z \text{ child of } y} h_y^{l_z}\nonumber\\
		&= 1 + h_y^{\sum_{z \text{ child of } y} l_z} = 1 + h_y^{l_y}\nonumber
	\end{align}

	Now, let $u_1,\ldots,u_k$ be nodes in the causal graph which are not ancestors of $x$ but are immediate descendants of some proper ancestor of $x$ (look Figure~\ref{fig:ASV_forest_example} for an example). It follows by the definition of $f$ that
	\begin{align}\label{eq:Lambda}
		|\Lambda_C^x| = \prod_{i=1}^k f(u_i)
	\end{align}
	and therefore
	\begin{align*}
		|\Lambda_C^x| \leq h^l + 1
	\end{align*}
	where the inequality is obtained using arguments similar to those from Eq.~\eqref{eq:bounding}.
\end{proof}

\begin{proof}[of Proposition~\ref{prop:algo}]
	The algorithm implements a recursion inspired by Eq.~\eqref{eq:fun_for_equi_classes}. First we recursively compute the equivalence classes of each unrelated subtree. We then combine these classes with one another, and finally integrate them with the ancestors and descendants while respecting the precedence constraints imposed by the DAG. In particular, descendants are added on the right of $x_i$, while including ancestors requires counting the valid interleavings with the nodes placed to its left. This yields all equivalence classes and their sizes without explicitly enumerating topological orders.

	To be more precise, let $u_1,\ldots,u_k$ be the nodes that are not ancestors of $x_i$ and are immediate descendants of some ancestor of $x_i$ (look Figure~\ref{fig:ASV_forest_example} for an example). Let $B_{j}$ be, for each $u_j$, the set inside the definition of $f(u_j)$ from Eq.~\eqref{eq:f}. What we need to count is, for every way of choosing $S_j \in B_{j}$, the number of topological sorts such that $\pi_{<x} = A \cup \bigcup_{j=1}^k S_j$ where $A$ are the ancestors of $x_i$.
	This computation is not straightforward because, even though there are no immediate constraints in the ordering of the sets $B_j$, nodes from $B_j$ cannot appear in the topological sort before the common ancestor of $u_j$ and $x_i$. Nonetheless, these values can be computed in a recursive manner using standard counting techniques. Moreover, when implemented using dynamic programming such a recursion has a complexity that grows polynomially on the number of nodes and $\prod_{j=1}^k f(u_j)$, which equals $|\Lambda^x_C|$ (see Eq.~\eqref{eq:Lambda}).
\end{proof}

\begin{proof}[of Proposition~\ref{prop:sampling_to_compute_ASV}]
	Consider the random variable $\randomVar$ defined over the uniform distribution $\mathcal{D}_{unif}$ over $topos(C)$ given by 
	\[
	\randomVar(\pi) = \left[ \nu_{M,e}(\pi_{<x} \cup \{x\}) - \nu_{M,e}(\pi_{<x}) \right].
	\]
	It holds that
	\begin{align*}
		\mathbb{E}[\randomVar] = \phi^{ASV}_{M,e}(x)
	\end{align*}
	Now, if $\randomVar_{\varepsilon_{samp}}$ denotes the analogous random variable distributed according to the distribution $\mathcal{D}_{\nu}$ induced by the sampling procedure $\procedureSampling$,
	\begin{align*}
		|&\mathbb{E}[\randomVar] - \mathbb{E}[\randomVar_{\varepsilon_{samp}}]| = \\
		&= \left|\sum_{\pi \in topos(C)} \left[\Pr\left( \mathcal{D}_{unif} = \pi \right) - \Pr\left( \mathcal{D}_{\nu}= \pi \right) \right] \left[ \nu_{M,e}(\pi_{<x} \cup \{x\}) - \nu_{M,e}(\pi_{<x}) \right] \right|\\
		&\leq \sum_{\pi \in topos(C)} \left|\Pr\left( \mathcal{D}_{unif} = \pi \right) - \Pr\left( \mathcal{D}_{\nu} = \pi \right) \right| \leq \varepsilon_{samp}
	\end{align*}
	Then, to approximate $\mathbb{E}[\randomVar]$ we can simply estimate $\mathbb{E}[\randomVar_{\varepsilon_{samp}}]$. To do so we apply the procedure $\procedureSampling$ to compute a subset $\subsetPerm$ of the topological orderings and then use $\procedureNu$ to obtain
	\begin{align*}
		\mathbb{E}[\randomVar_{\varepsilon_{samp}}] \sim \frac{1}{|\subsetPerm|} \sum_{\pi \in \subsetPerm} \left[ \procedureNu(\pi_{<x} \cup \{x\}) - \procedureNu(\pi_{<x}) \right]
	\end{align*}
	Using Hoeffding's inequality we can ensure that by taking $O(1/\varepsilon_{\nu}^2)$ samples the set $\subsetPerm$ is a good representation of the average with high probability, i.e.
	\begin{align*}
		\left|\mathbb{E}[\randomVar_{\varepsilon_{samp}}] - \frac{1}{|\subsetPerm|} \sum_{\pi \in \subsetPerm} \left[ \nu_{M,e}(\pi_{<x} \cup \{x\}) - \nu_{M,e}(\pi_{<x}) \right]\right| \leq \varepsilon_\nu
	\end{align*}
	Finally, after sampling $K$ we merely approximate each call to $\nu_{M,e}$ with $\procedureNu$. It holds that
	\begin{align*}
		\Bigg| \frac{1}{|\subsetPerm|} &\sum_{\pi \in \subsetPerm} \left[ \nu_{M,e}(\pi_{<x} \cup \{x\}) - \nu_{M,e}(\pi_{<x}) \right] - \frac{1}{|\subsetPerm|} \sum_{\pi \in \subsetPerm} \left[ \procedureNu(\pi_{<x} \cup \{x\}) - \procedureNu(\pi_{<x}) \right]\Bigg|\\
		&\leq \frac{1}{|\subsetPerm|} \sum_{\pi \in \subsetPerm} \left[ |\nu_{M,e}(\pi_{<x} \cup \{x\}) - \procedureNu(\pi_{<x} \cup \{x\})| + |\nu_{M,e}(\pi_{<x}) - \procedureNu(\pi_{<x})| \right]\\
		&\leq 2 \varepsilon_{\nu}
	\end{align*}
	The result from the statement is obtained by applying the triangle inequality with all the obtained equations.
\end{proof}

\end{document}